    \newenvironment{customlegend}[1][]{%
        \begingroup
        \csname pgfplots@init@cleared@structures\endcsname
        \pgfplotsset{#1}%
    }{%
        \csname pgfplots@createlegend\endcsname
        \endgroup
    }%
    \def\addlegendimage{\csname pgfplots@addlegendimage\endcsname}
\definecolor{pixel 0}{HTML}{FFFFFF}
\definecolor{pixel 1}{HTML}{FF0000} 
\newcommand{\tw}[1]{\texttt{#1}}
\newcommand{\name}{\textsc{Poppi}}
\newcommand{\namem}{\textsc{Poppi$_{\textsc{MIL}}$}}
\newcommand{\popper}{\textsc{Popper}}
\newcommand{\metagol}{\textsc{Metagol}}
\newcommand{\inspire}{\textsc{Inspire}}
\newcommand{\citet}[1]{\citeauthor{#1} \shortcite{#1}}
\newcommand{\metagoln}{
\textsc{Metagol}$_{\scalebox{0.6}{\cancel{\rotatebox[origin=c]{245}{$\circlearrowleft$}}}}$
}
\theoremstyle{definition}
\newtheorem{definition}{Definition}
\newtheorem{example}{Example}
\newtheorem{theorem}{Theorem}
\newenvironment{code}
{
\ttfamily
\small
\begin{center}
\begin{tabular}{l}
}
{
\end{tabular}
\end{center}
\par
}
\title{Predicate Invention by Learning From Failures}
\author{
Andrew Cropper\And
Rolf Morel
\\
\affiliations
University of Oxford
\emails
{firstname.lastname}@cs.ox.ac.uk
}
\begin{document}

\maketitle
\begin{abstract}
Discovering novel high-level concepts is one of the most important steps needed for human-level AI.
In inductive logic programming (ILP), discovering novel-high level concepts is known as \emph{predicate invention} (PI).
Although seen as crucial since the founding of ILP, PI is notoriously difficult and most ILP systems do not support it.
In this paper, we introduce \name{}, an ILP system that formulates the PI problem as an answer set programming problem.
Our experiments show that (i) PI can drastically improve learning performance when useful, (ii) PI is not too costly when unnecessary, and (iii) \name{} can substantially outperform existing ILP systems.
\end{abstract}\section{Introduction}

Some of the greatest paradigm shifts in science come from the invention of new predicates, such as Galileo's invention of acceleration and Joule's invention of thermal energy \cite{ai:book}.
\citet{russell:humancomp} argues that developing techniques to discover novel high-level concepts is one of the most important steps needed to reach human-level AI.

In inductive logic programming (ILP) \cite{mugg:ilp}, a form of logic-based machine learning, discovering novel high-level concepts is known as \emph{predicate invention} (PI) \cite{stahl:pi}.
A classical example of PI is learning the definition of \emph{grandparent} given only the relations \emph{mother} and \emph{father}. An ILP system without PI can learn the hypothesis:

\begin{center}
\begin{tabular}{l}
\emph{grandparent(A,B) $\leftarrow$ mother(A,C),mother(C,B)}\\
\emph{grandparent(A,B) $\leftarrow$ mother(A,C),father(C,B)}\\
\emph{grandparent(A,B) $\leftarrow$ father(A,C),mother(C,B)}\\
\emph{grandparent(A,B) $\leftarrow$ father(A,C),father(C,B)}\\
\end{tabular}
\end{center}

\noindent
By contrast, a system with PI can learn the hypothesis:

\begin{center}
\begin{tabular}{l}
\emph{grandparent(A,B) $\leftarrow$ inv(A,C),inv(C,B)}\\
\emph{inv(A,B) $\leftarrow$ mother(A,B)}\\
\emph{inv(A,B) $\leftarrow$ father(A,B)}\\
\end{tabular}
\end{center}

\noindent
The symbol \emph{inv} is invented and can be understood as \emph{parent}.
Introducing this symbol makes the hypothesis smaller, both in the number of literals and clauses, which is beneficial because the search complexity of ILP is often a function of the size of the smallest solution \cite{playgol}.

Although seen as crucial since the earliest days of ILP \cite{cigol} and repeatedly stated as a major challenge \cite{pedro:pi,ilp20,kramer:ijcai20}, most ILP systems do not support PI \cite{ilp30}.
As \citet{kramer1995predicate} states, PI is difficult because it is unclear \emph{when} and \emph{how} we should invent a symbol, how many arguments should it have, what its types are, etc.

Interest in PI has resurged due to \metagol{} \cite{mugg:metagold,metaho}, which uses \emph{metarules}, second-order clauses, to reduce the complexity of PI by restricting the syntax of hypotheses.
The major limitation of \metagol{} and other metarule-based approaches \cite{wang:softpi,dilp,hexmil,celine:bottom} is the inherent need for metarules, which a user must provide.
Given insufficient metarules, these approaches cannot learn any solution, let alone one with PI \cite{reduce}.

To overcome this limitation, we introduce a PI approach based on \emph{learning from failures} (LFF) \cite{popper}, which frames the ILP problem as an answer set programming (ASP) problem \cite{asp}.
We build on LFF by framing the PI problem as an ASP problem.
Compared to existing approaches, our approach does not need a user to manually provide metarules or specify the arity and argument types of an invented symbol \cite{ilasp}, i.e. our approach supports \emph{automatic} PI.
Moreover, our approach retains the benefits of LFF, notably scalability and the ability to learn recursive programs.
Overall, our contributions are:
\begin{itemize}
    \item
    We extend LFF to support predicate invention.
    We introduce a new type of \emph{redundancy} constraint and show that it is sound with respect to optimal solutions.

    \item
    We introduce \name{}, a LFF implementation that supports automatic PI.
    \item
    We experimentally show that (i) PI can drastically improve learning performance when \emph{useful}, (ii) PI is not too costly when unnecessary, and (iii) \name{} can substantially outperform existing ILP systems.
\end{itemize}
\section{Related Work}
\label{sec:related}


Early work on PI was based on the idea of inverse resolution and \emph{W} operators \cite{cigol}.
Although these approaches could support PI, they never demonstrated completeness, partly because of the lack of a declarative bias to delimit the hypothesis space \cite{mugg:metagold}.

One PI approach is to use \emph{placeholders} \cite{DBLP:conf/ilp/LebanZB08} to predefine invented symbols.
However, this approach requires that a user manually specify the arity and argument types of an `invented' symbol \cite{ilasp}, which rather defeats the point, or requires enumerating every possible symbol \cite{apperception}.
By contrast, \name{} does not need a user to manually pre-define predicates and can automatically infer argument types, i.e. \name{} supports \emph{automatic PI}.
\inspire{} \cite{inspire} also uses ASP for PI, but only learns single-clause non-recursive programs.
By contrast, \name{} learns recursive multi-clause programs.
PI can be performed by learning programs in a multi-task setting \cite{metabias} or as a pre-/post-processing step \cite{seb:cluster,playgol,alps,knorf}.
By contrast, \name{} performs PI whilst learning.

\metagol{} uses metarules to restrict the syntax of hypotheses and drive PI.
For instance, the \emph{chain} metarule ($P(A,B) \leftarrow Q(A,C), R(C,B)$) allows \metagol{} to induce programs such as \emph{f(A,B) $\leftarrow$ tail(A,C),tail(C,B)}, which drops the first two elements from a list.
To induce longer clauses, \metagol{} chains clauses together using PI, such as the following program which drops the first four elements from a list:
\begin{center}
$
P = \left\{
\begin{array}{l}
	\emph{f(A,B)} \leftarrow \emph{inv(A,C),inv(C,B)}\\
	\emph{inv(A,B)} \leftarrow \emph{tail(A,C),tail(C,B)}\\
\end{array}
\right\}
$
\end{center}

\noindent
The major limitation of metarule-driven PI approaches \cite{wang:softpi,dilp,hexmil,celine:bottom} is their inherent need for user-supplied metarules, which, in some cases, are impossible to provide \cite{reduce}.
\name{} overcomes this major limitation because it does not need metarules.
Moreover, in Section \ref{sec:poppi-mil}, we show that \name{} subsumes \metagol{} because it can replicate its search strategy.

\name{} builds on the LFF system \popper{}, which works in three repeating stages: \emph{generate}, \emph{test}, and \emph{constrain}.
In the generate stage, \popper{} builds a logic program that satisfies a set of \emph{hypothesis constraints}.
In the test stage, \popper{} tests a program against the examples.
A hypothesis \emph{fails} when it is incomplete (does not entail all the positive examples) or inconsistent (entails a negative example).
If a hypothesis fails, \popper{} learns hypothesis constraints from the failure, which it then uses to restrict subsequent generate stages.
It repeats this process until it finds a complete and consistent program.
\name{} extends \popper{} to support automatic PI.
\section{Problem Setting}
\label{sec:setting}

We now define PI and extend LFF to support PI.
We assume familiarity with logic programming \cite{lloyd:book}.



\subsection{Predicate Invention}
We focus on the ILP learning from entailment setting \cite{luc:book}.

\begin{definition}[\textbf{ILP input}]
An ILP input is a tuple $(B,E^+,E^-,\mathcal{H})$, where $B$ is a logic program denoting background knowledge, $E^+$ and $E^-$ are sets of ground atoms which represent positive and negative examples respectively, and $\mathcal{H}$ is a hypothesis space (the set of all hypotheses).
\end{definition}
\noindent
In practice, the hypothesis space is restricted by a language bias, such as metarules or mode declarations.
The ILP problem is to find a \emph{solution}:
\begin{definition}[\textbf{Solution}]
Given an ILP input $(B,E^+,E^-,\mathcal{H})$, the ILP problem is to return a hypothesis $H \in \mathcal{H}$ such that
$\forall e \in E^+, H \cup B \models e$ and $\forall e \in E^-, H \cup B \not\models e$.
\end{definition}


\noindent
\citet{mugg:pi} and \citet{stahl:pi} both provide definitions for PI, which we adapt.
We denote the predicate signature (the set of all predicate symbols) of a logic program $P$ as \emph{ps(P)}.

\begin{definition}[\textbf{Predicate invention}]
Let $(B,E^+,E^-,\mathcal{H})$ be an ILP input and $H \in \mathcal{H}$.
Then $H$ contains an invented predicate symbol if and only if $ps(H) \setminus ps(B \cup E^+ \cup E^-) \neq \emptyset$.
\end{definition}

\noindent
PI is necessary when needed to learn a solution:

\begin{definition}[\textbf{Necessary predicate invention}]
Let $I = (B,E^+,E^-,\mathcal{H})$ be an ILP input.
Then PI is necessary for $I$ if no $H$ in $\mathcal{H}$ is a solution without PI.
\end{definition}

\noindent
PI is useful when it allows us to learn a better solution:

\begin{definition}[\textbf{Useful predicate invention}]
\label{def:pi}
Let $(B,E^+,E^-,\mathcal{H})$ be an ILP input and $cost : \mathcal{H} \mapsto R$ be an arbitrary cost function.
Then PI is useful when (i) there is a $H \in \mathcal{H}$ such that $H$ is a solution with invented predicate symbols, and (ii) $\forall H' \in \mathcal{H}$, where $H'$ is a solution, $cost(H) \leq cost(H')$.
\end{definition}

\noindent
In this paper, we define the \emph{cost(H)} to be the total number of literals in the logic program $H$.

\subsection{Learning From Failures}
We extend LFF to support PI.
LFF uses \emph{predicate declarations} to restrict what predicate symbols may appear in a hypothesis.
A predicate declaration is a ground atom of the form $\emph{head\_pred}(\emph{p},a)$ or $\emph{body\_pred}(\emph{p},a)$ where \emph{p} is a predicate symbol of arity $a$.
Given a set of predicate declarations $D$, a definite clause $C$ is \emph{declaration consistent} when (i) if $\emph{p}/m$ is the predicate in the head of $C$, then $\emph{head\_pred}(\emph{p},m)$ is in $D$, and (ii) for all $\emph{q}/n$ predicate symbols in the body of $C$, $\emph{body\_pred}(\emph{q},n)$ is in $D$.
LFF uses \emph{hypothesis constraints} to restrict the hypothesis space.
Let $\mathcal{L}$ be a language that defines hypotheses, i.e.~a meta-language.
Then a hypothesis constraint is a constraint expressed in $\mathcal{L}$.
Let $C$ be a set of hypothesis constraints written in a language $\mathcal{L}$.
A set of definite clauses $H$ is \emph{consistent} with $C$ if, when written in $\mathcal{L}$, $H$ does not violate any constraint in $C$.

We define the LFF problem:


\begin{definition}[\textbf{LFF input}]
\label{def:probin}
The \emph{LFF} input is a tuple $(E^+, E^-, B, D, C)$ where $E^+$ and $E^-$ are sets of ground atoms denoting positive and negative examples respectively; $B$ is a Horn program denoting background knowledge; $D$ is a set of predicate declarations; and $C$ is a set of hypothesis constraints.
\end{definition}

\noindent
A definite program is a \emph{hypothesis} when it is consistent with both $D$ and $C$.
We denote the set of such hypotheses as $\mathcal{H}_{D,C}$.
We define a LFF solution:

\begin{definition}[\textbf{LFF solution}]
\label{def:solution}
Given an input tuple $(E^+, E^-, B, D, C)$, a hypothesis $H \in \mathcal{H}_{D,C}$ is a \emph{solution} when $H$ is \emph{complete} ($\forall e \in E^+, \; B \cup H \models e$) and \emph{consistent} ($\forall e \in E^-, \; B \cup H \not\models e$).
\end{definition}

\noindent
If a hypothesis is not a solution then it is a \emph{failure} or a \emph{failed hypothesis}.
A hypothesis is \emph{incomplete} when $\exists e \in E^+, \; H \cup B \not \models e$.
A hypothesis is \emph{inconsistent} when $\exists e \in E^-, \; H \cup B \models e$.
A hypothesis is \emph{totally incomplete} when $\forall e \in E^+, \; H \cup B \not \models e$.
We define an \emph{optimal} solution:

\begin{definition}[\textbf{Optimal solution}]
\label{def:opthyp}
Given an input tuple $(E^+, E^-, B, D, C)$, a hypothesis $H \in \mathcal{H}_{D,C}$ is \emph{optimal} when (i) $H$ is a solution, and (ii) $\forall H' \in \mathcal{H}_{D,C}$, where $H'$ is a solution, $cost(H) \leq cost(H')$.
\end{definition}

\subsection{Redundancy Constraints}
\label{sec:picons}

The key idea of LFF is to learn hypothesis constraints from failed hypotheses.
\citet{popper} introduce constraints based on subsumption \cite{plotkin:thesis}.
A clause $C_1$ \emph{subsumes} a clause $C_2$ ($C_1 \preceq C_2$) if and only if there exists a substitution $\theta$ such that $C_1\theta \subseteq C_2$.
A clausal theory $T_1$ subsumes a clausal theory $T_2$ ($T_1 \preceq T_2$) if and only if $\forall C_2 \in T_2, \exists C_1 \in T_1$ such that $C_1$ subsumes $C_2$.
A clausal theory $T_1$ is a \emph{specialisation} of a clausal theory $T_2$ if and only if $T_2 \preceq T_1$.
A clausal theory $T_1$ is a \emph{generalisation} of a clausal theory $T_2$ if and only if $T_1 \preceq T_2$.

If a hypothesis $H$ is incomplete, a \emph{specialisation} constraint prunes specialisations of $H$, as they are also guaranteed to be incomplete.
Likewise, if a hypothesis $H$ is inconsistent, a \emph{generalisation} constraint prunes generalisations of $H$, as they are also guaranteed to be inconsistent.
\citet{popper} show that generalisation and specialisation constraints are \emph{sound} in that they never prune solutions from the hypothesis space.
The authors also introduce a third type of constraint called an \emph{elimination} constraint and show it is sound for optimal solutions.
If a hypothesis $H$ is totally incomplete, then there is no need for $H$ (or a specialisation of it) to appear in a complete and \emph{separable} hypothesis.
A separable hypothesis $H$ is one where no predicate symbol in the head of a clause in $H$ occurs in the body of a clause in $H$.
Elimination constraints are unsuitable for PI because an invented predicate should always appear in the body of a clause.

To overcome this limitation, we introduce \emph{redundancy} constraints.
We adapt the standard notion of a dependency graph \cite{apt:negationsurvey} of a program from predicates  to clauses:

\begin{definition}[\textbf{Dependency}]
The \emph{dependency graph} for the definite program $P$ is a directed graph where the nodes are the clauses in $P$.
There is an edge between nodes $C_1$ and $C_2$ if the head predicate symbol of $C_2$ occurs in the body of $C_1$.
We say $C_1$ \emph{depends} on $C_2$ if there is a path from $C_1$ to $C_2$.
\end{definition}

\noindent
We require that each clause in a hypothesis is reachable from a target predicate clause (a clause that generalises the examples).
We use the dependency relation to characterise when a program is at least as specific as another program:


\begin{definition}[\textbf{Contained specialisation}]
Let $P$ and $Q$ be definite programs and $P' = \{C \in Q \;|\; \exists D \in P, D \preceq C\}$.
A clause $C \in P'$ \emph{$P$-specialises} $Q$ if
(i) $C$ does not depend on any clause in $Q \setminus P'$, and
(ii) no clause in $Q \setminus P'$ depends on $C$.
\end{definition}


\noindent
The idea is that part of $Q$ cannot entail more than $P$, as the clauses of $Q$ on paths to $C$ specialise $P$, as we now illustrate:


\begin{example}
\label{ex:redundant_clause}
Let $P = \{ C_1, C_2 \}$, $C_1 = \emph{f(A,B)} \leftarrow \emph{inv(A,C),inv(C,B)}$, $C_2 = \emph{inv(A,B)} \leftarrow \emph{tail(A,C),tail(C,B)}$, $C_Q = \emph{f(A,B) $\leftarrow$ reverse(A,B)}$, and $Q = P \cup \{ C_Q \}$.
In $Q$, neither $C_2$ nor $C_Q$ depend on one another, so $C_2$ $P$-specialises $Q$.
As $C_2$ cannot help $Q$ entail more than $P$ does, if $P$ is a totally incomplete hypothesis then $C_2$ is \emph{redundant} in $Q$.
\end{example}

\noindent
If the hypothesis $P$ is totally incomplete and there is a clause that $P$-specialises the hypothesis $Q$, we call $Q$ a \emph{redundant} hypothesis.
We show that redundant hypotheses are not optimal:

\begin{theorem}[\textbf{Redundancy soundness}]
\label{prop:redundant_optimality}
Let $(E^+, E^-, B, D, C)$ be a LFF input, $H_1, H_2 \in \mathcal{H}_{D,C}$, and $H_1$ be totally incomplete.
If $H_2$ has a clause that $H_1$-specialises it, then $H_2$ is not an optimal solution.
\end{theorem}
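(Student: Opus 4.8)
The plan is to show $H_2$ is not optimal by producing a strictly cheaper solution: $H_2$ with the offending clause removed. First dispose of the trivial case --- if $H_2$ is not a solution then it is not an optimal solution --- so assume $H_2$ is a solution. Let $C$ be the clause that $H_1$-specialises $H_2$; by the definition of contained specialisation, $C \in H_2' := \{C' \in H_2 \mid \exists D \in H_1,\ D \preceq C'\}$, clause $C$ does not depend on any clause of $H_2 \setminus H_2'$, and no clause of $H_2 \setminus H_2'$ depends on $C$. Put $H := H_2 \setminus \{C\}$, and additionally delete any clauses thereby left unreachable from a target-predicate clause so that $H \in \mathcal{H}_{D,C}$; this deletion only lowers $cost$ and, as the examples are atoms of the target predicate, does not change which examples are entailed. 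Since $C$ contributes at least its head literal, $cost(H) < cost(H_2)$, so by Definition~\ref{def:opthyp} it suffices to prove that $H$ is a solution.

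Consistency of $H$ is immediate from monotonicity of definite programs: $H \subseteq H_2$, so $B \cup H \models a$ implies $B \cup H_2 \models a$ for every ground atom $a$; since $B \cup H_2 \not\models e$ for all $e \in E^-$, the same holds for $B \cup H$.

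The substantive step is completeness: $B \cup H \models e$ for all $e \in E^+$. I would argue by contradiction. Suppose $e \in E^+$ with $B \cup H_2 \models e$ but $B \cup H \not\models e$; then every SLD-derivation of $e$ from $B \cup H_2$ uses $C$. Fix one such derivation and look at the clauses it uses through the lens of the dependency graph: every clause lying on a path to a use of $C$ depends on $C$ and so, by condition (ii), lies in $H_2'$; and every clause used below a use of $C$ is a clause that $C$ depends on (this set is downward closed in the dependency graph), and so, by condition (i), lies in $H_2'$. By definition every clause of $H_2'$ is subsumed by --- hence a logical consequence of --- some clause of $H_1$. Replacing each such clause by a subsuming clause of $H_1$, and pruning the body literals (and their sub-derivations) that the more general clause does not need, converts the derivation of $e$ into a derivation of $e$ from $B \cup H_1$. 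Thus $B \cup H_1 \models e$, contradicting that $H_1$ is totally incomplete. Hence no positive example is lost, $H$ is complete, and so $H$ is a solution cheaper than $H_2$, i.e.\ $H_2$ is not optimal. (Informally: the fragment of $H_2$ through which $C$ can propagate anything is a specialisation of a fragment of $H_1$, so it cannot help $B \cup H_2$ entail a positive example that $B \cup H_1$ does not already entail --- and $H_1$ entails none.)

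The main obstacle is making that replacement rigorous. The clauses $C$ depends on, together with $C$, form a downward-closed, self-contained sub-program that is pointwise more specific than a sub-program of $H_1$, so they are easy to re-route through $H_1$. The delicate case is the clauses above $C$: when a clause of $H_2'$ is replaced by its $H_1$-subsumer some body literals may be dropped, and the sub-derivations of the surviving body literals may branch away from the path to $C$ and call clauses outside $H_2'$. I would discharge this by an induction on the derivation, showing each such branch either proves a literal the subsuming $H_1$-clause does not require (so it can be pruned) or can itself be routed through $H_2'$ --- and this is the point at which conditions (i) and (ii), and possibly the requirement that every clause of a hypothesis be reachable from a target-predicate clause, are actually used.
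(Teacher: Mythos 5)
Your strategy is the same as the paper's: delete the offending clause $C$ and argue that the remainder is a strictly cheaper solution, with consistency following from monotonicity of definite programs and completeness being the one substantive step. You are in fact more careful than the paper in two respects (you check that the pruned program stays in $\mathcal{H}_{D,C}$, and you try to make explicit where conditions (i) and (ii) of contained specialisation are actually used). But the completeness step --- which you honestly flag as ``the main obstacle'' and leave as a plan (``I would argue by contradiction\dots I would discharge this by an induction\dots'') --- is precisely the step the paper's proof dispatches in one sentence (``$C$ is only used by totally incomplete $H_2\setminus H_2'$, hence $C$ cannot have contributed''), and your sketched dichotomy does not close it.

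The problem is that conditions (i) and (ii) only constrain dependencies to and from $C$ itself; a clause of the specialising set that sits \emph{above} $C$ in a derivation may also depend on clauses \emph{outside} that set, and its $H_1$-subsumer may still require the corresponding body literal. Concretely, take $H_1=\{f(A)\leftarrow inv(A),h(A);\; inv(A)\leftarrow b_2(A);\; h(A)\leftarrow b_5(A)\}$ and $H_2=\{D_1, C, D_3\}$ with $D_1= f(A)\leftarrow inv(A),h(A),b_1(A)$, $C= inv(A)\leftarrow b_2(A),b_1(A)$, $D_3= h(A)\leftarrow b_4(A)$, where the $b_i$ are background predicates with $b_1(a),b_2(a),b_4(a)$ true and $b_5(a)$ false, and $E^+=\{f(a)\}$. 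Then the specialising set is $\{D_1,C\}$, both conditions hold for $C$ (its body is background-only, and $D_3$ neither calls nor is called by $C$), $H_1$ is totally incomplete, yet $B\cup H_2\models f(a)$ via a derivation that genuinely uses $C$: the branch for $h(a)$ is neither prunable (the subsumer $f(A)\leftarrow inv(A),h(A)$ still needs $h$) nor routable through the specialising set (only $D_3$ defines $h$), so both horns of your ``either prune it or route it'' case analysis fail, and indeed $B\cup(H_2\setminus\{C\})\not\models f(a)$. So the deletion strategy itself can break completeness, not merely your justification of it; rescuing the theorem would require exhibiting a different cheaper solution (here $f(A)\leftarrow b_1(A)$ happens to work) or showing that an \emph{optimal} $H_2$ cannot exhibit this configuration. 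As written, your proposal correctly locates the crux but does not prove it.
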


\noindent
The proof of Theorem \ref{prop:redundant_optimality} is in Appendix A.
We call constraints that only prune redundant hypotheses \emph{redundancy} constraints.
We show that redundancy constraints prune more than elimination constraints:

\begin{theorem}[\textbf{Redundancy effectiveness}]
\label{prop:more_pruning}
Let $(E^+, E^-, B, D, C)$ be a LFF input and $H \in \mathcal{H}_{D,C}$ be totally incomplete.
As (i) all separable hypotheses containing specialisations of $H$ are redundant, and
(ii) there exist $H' \in \mathcal{H}_{D,C}$ that are redundant but not separable,
redundancy constraints prune strictly more than elimination constraints.
\end{theorem}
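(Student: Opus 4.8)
The plan is to unpack ``prune strictly more'' into the two inclusions that clauses (i) and (ii) of the statement are designed to supply, and to prove each. Fix the totally incomplete hypothesis $H \in \mathcal{H}_{D,C}$. The elimination constraint derived from $H$ prunes exactly those hypotheses $H'$ that are \emph{separable} and contain (a nonempty copy of) $H$ or a specialisation of $H$ --- equivalently, some clause of $H'$ is subsumed by a clause of $H$. The redundancy constraint derived from $H$ prunes exactly those hypotheses that are \emph{redundant} with respect to $H$. Hence it suffices to show (a) every hypothesis pruned by the elimination constraint is also pruned by the redundancy constraint, and (b) the redundancy constraint prunes at least one hypothesis that the elimination constraint does not.

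For part (a) --- which is clause (i) of the statement --- I would take any separable $H' \in \mathcal{H}_{D,C}$ that contains a specialisation of $H$, and set $P' = \{C \in H' \mid \exists D \in H,\ D \preceq C\}$, observing that $P' \neq \emptyset$ since at least one clause of $H'$ is subsumed by a clause of $H$. The key observation is that separability of $H'$ means no predicate in the head of a clause of $H'$ occurs in the body of a clause of $H'$, so the dependency graph of $H'$ has no edges whatsoever; in particular no clause of $H'$ depends on any other clause of $H'$. Consequently, for any $C \in P'$ the two conditions in the definition of ``contained specialisation'' ($C$ does not depend on a clause of $H' \setminus P'$, and no clause of $H' \setminus P'$ depends on $C$) hold vacuously, so $C$ $H$-specialises $H'$. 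Since $H$ is totally incomplete, $H'$ is redundant, i.e.\ it is pruned by the redundancy constraint.

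For part (b) --- clause (ii) --- I would exhibit a concrete redundant but non-separable hypothesis, and Example~\ref{ex:redundant_clause} already provides one: with $H = P = \{C_1,C_2\}$ totally incomplete, the hypothesis $Q = P \cup \{C_Q\}$ is redundant (since $C_2$ $P$-specialises $Q$), yet $Q$ is not separable because \emph{inv} occurs both in the head of $C_2$ and in the body of $C_1$. Because an elimination constraint only ever prunes separable hypotheses, it cannot prune $Q$, whereas the redundancy constraint does. Combining (a) and (b) yields the strict inclusion, so redundancy constraints prune strictly more than elimination constraints.

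I expect the only real subtlety to be the bookkeeping in part (a): one must read ``contains a specialisation of $H$'' as ``contains a nonempty subtheory that is subsumed clause-wise by $H$'' (otherwise the empty subtheory would make the claim vacuous and meaningless), after which the ``separable implies edgeless dependency graph'' observation does all the work of verifying the contained-specialisation conditions. Part (b) is essentially a matter of pointing at the running example and noting the scope mismatch --- elimination constraints act only on separable hypotheses, so any non-separable redundant hypothesis witnesses the strictness.
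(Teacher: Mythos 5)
Your proof is correct and follows essentially the same route as the paper: for (i) you observe that separability makes the dependency graph of $H'$ edgeless, so the two conditions of contained specialisation hold vacuously for any clause of $H'$ subsumed by a clause of $H$, and for (ii) you invoke Example~\ref{ex:redundant_clause} with $Q$ being redundant but not separable. Your version merely spells out the bookkeeping (the nonemptiness of $P'$ and the scope mismatch between the two constraint types) that the paper's two-line proof leaves implicit.
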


\begin{proof}
For (i), let $H^*$ be a separable hypothesis containing a specialisation of $H$.
As separability implies no clause of $H^*$ depends on another, any clause of $H^*$ specialising a clause of $H$ is redundant.
For (ii) Example \ref{ex:redundant_clause} suffices, taking $H = P$ and $H' = Q$.
Hypothesis $Q$ is redundant but not separable.
\end{proof}

\section{\name{}}
\label{sec:impl}

\name{} extends \popper{} with a new PI module and the ability to learn redundancy constraints from failures.
Due to space limitations, we cannot describe \popper{} in detail, so we refer the reader to the original paper for a detailed explanation \cite{popper}.
\name{} works in three repeating stages: \emph{generate}, \emph{test}, and \emph{constrain}, which we now describe.

\subsubsection{Generate}
\name{} takes as input (i) predicate declarations, (ii) hypothesis constraints, and (iii) bounds on the maximum number of variables, literals, and clauses in a hypothesis.
\name{} returns an answer set which represents a definite program, if one exists.
For instance, consider an answer set with the following head (\tw{h\_lit}) and body (\tw{b\_lit}) literals:

\begin{code}
h\_lit(0,last,2,(0,1)), b\_lit(0,tail,2,(0,2)),\\
b\_lit(0,empty,1,(2,)), b\_lit(0,head,2,(0,1)),\\
h\_lit(1,last,2,(0,1)), b\_lit(1,tail,2,(0,2)),\\
b\_lit(1,last,2,(2,1))
\end{code}

\noindent
The first argument of each literal is the clause index, the second is the predicate symbol, the third is the arity, and the fourth is the literal variables, where \tw{0} represents \tw{A}, \tw{1} represents \tw{B}, etc.
This answer set corresponds to the definite program:

\begin{code}
last(A,B):- tail(A,C), empty(C), head(A,B).\\
last(A,B):- tail(A,C), last(C,B).
\end{code}

\noindent
Figure \ref{fig:alan} shows the base ASP program to generate programs.
\name{} extends \popper{} by allowing invented symbols (\tw{invented(P,A)}) to appear in head and body literals.

\name{} uses ASP constraints to ensure that a program is declaration consistent and obeys hypothesis constraints.
For instance, to prune programs where the head literal appears in the body, \name{} enforces the constraint:
\begin{code}
:-
    h\_lit(C,P,A,Vs),
    b\_lit(C,P,A,Vs).
\end{code}

\noindent
By later adding learned hypothesis constraints, \name{} eliminate answer sets and thus prunes the hypothesis space.

\begin{figure}[ht]
\centering
\small
\begin{minipage}{1.02\linewidth}
\small
\begin{lstlisting}[frame=single]
head_p(P,A):- head_pred(P,A).
head_p(P,A):- invented(P,A).
body_p(P,A):- body_pred(P,A).
body_p(P,A):- invented(P,A).
clause(0..N-1):- max_clauses(N).
{h_lit(C,P,A,Vs)}:- head_p(P,A),vars(A,Vs),clause(C).
{b_lit(C,P,A,Vs)}:- body_p(P,A),vars(A,Vs),clause(C).
\end{lstlisting}
\end{minipage}
\caption{
Partial listing of the program generator in \name{}.
}
\label{fig:alan}
\end{figure}
\noindent

\paragraph{PI Module.}
\label{sec:pi}
\name{} builds on \popper{} with a PI module, partially shown in Figure \ref{fig:pi}.
Lines 1-3 show the ASP choice rules which allow \name{} to perform PI.
\name{} automatically adds $n-1$ choice rules to allow $n-1$ invented symbols, where $n$ is the maximum number of clauses allowed in a program.
This module contains many constraints specific to PI to prune redundant programs.
Lines 4-6 define an ordering over invented symbols, similar to \metagol{}, and line 9 enforces this ordering.
Lines 7-8 ensure that if an invented symbol is in the head of a clause then it must also appear in the body of a clause (and vice-versa).
Line 10 ensures that \name{} uses invented predicates in order, e.g.~to prevent the symbol \emph{inv2} from appearing in a program if \emph{inv1} does not already appear.
This module also includes code (omitted for brevity) to infer the argument types of an invented predicate by how it is used in the program and propagates the types through the program.

\begin{figure}[ht]
\centering
\small
\begin{minipage}{1\linewidth}
\small
\begin{lstlisting}[frame=single,numbers=left,stepnumber=1]
{invented(inv1,1..A)}:- max_arity(A).
{invented(inv2,1..A)}:- max_arity(A).
{invented(inv3,1..A)}:- max_arity(A).
lower(P,Q):- head_pred(P,_), invented(Q).
lower(inv1,inv2).
lower(inv2,inv3),
:- invented(P,A), not h_lit(_,P,A,_).
:- invented(P,A), not b_lit(_,P,A,_).
:- invented(P,_), lower(Q,P), not invented(Q,_).
:- h_lit(C,P,_,_), b_lit(C,Q,_,_), lower(Q,P).
\end{lstlisting}
\end{minipage}
\caption{
Partial listing of the PI module in \name{}.
}
\label{fig:pi}
\end{figure}
\noindent

\subsubsection{Test and constrain}
The test stage of \name{} is identical to \popper{}: \name{} transforms an answer set to a definite program (a Prolog program) and tests it against the training examples.
If a hypothesis fails, then, in the constrain stage, \name{} derives ASP constraints which it adds to the generator program to prune answer sets and constrain subsequent hypothesis generation.
For instance, suppose this hypothesis is inconsistent:

\begin{code}
last(A,B):- reverse(A,C), head(C,B).\\
\end{code}

\noindent
Then \name{} generates a \emph{generalisation} constraint to prune generalisations of this hypothesis:

\begin{lstlisting}[frame=single]
seen(C,id1):- h_lit(C,last,2,(V0,V1)),
  b_lit(C,reverse,2,(V0,V2)), b_lit(C,head,2,(V2,V1)).
:- seen(C,id1), clause_size(C,2).
\end{lstlisting}

\noindent
\name{} extends \popper{} with redundancy constraints (Section \ref{sec:picons}).
For instance, suppose this program is totally incomplete:

\begin{code}
f(A,B):- inv1(A,C), right(C,B).\\
inv1(A,B):- right(A,C), right(C,B).
\end{code}

\noindent
Whereas \popper{} would only generate a specialisation constraint, \name{} additionally generates a redundancy constraint:

\begin{lstlisting}[frame=single]
seen(C,c1):- h_lit(C,f,2,(V0,V1)),
  b_lit(C,inv1,2,(V0,V2)), b_lit(C,right,2,(V2,V1)).
seen(C,c2):- h_lit(C,inv1,2,(V0,V1)),
  b_lit(C,right,2,(V0,V2)), b_lit(C,right,2,(V2,V1)).
seen(p1):- seen(C0,c1), seen(C1,c2).
:- seen(p1), num_clauses(f,1), num_recursive(inv1,0).
:- seen(p1), num_clauses(inv1,1), num_recursive(f,0).
\end{lstlisting}

\noindent
Appendix B describes the algorithm \name{} uses to build redundancy constraints.

\paragraph{Loop.}
\name{} repeats the generate/test/constraint loop.
To find an optimal solution, \name{} increases the number of literals allowed in a program when the hypothesis space is empty at a certain program size (e.g. when there are no more programs to generate).
To improve efficiency, \name{} uses Clingo's multi-shot solving \cite{multishot} to maintain state between the three stages and thus remember learned conflicts.
This loop repeats until either (i) \name{} finds an optimal solution, or (ii) there are no more hypotheses to test.

\paragraph{\namem{}}
\label{sec:poppi-mil}
\metagol{} needs metarules to define the hypothesis space and drive PI.
\popper{} can simulate metarules through hypothesis constraints \cite{popper}.
By supporting PI, \name{} subsumes \metagol{}.
In our experiments, we directly compare \metagol{} against \name{} when given identical metarules, which we call \namem{}.
\section{Experiments}
\label{sec:exp}

Few papers empirically evaluate PI \cite{playgol,celine:bottom} and none in a systematic manner.
Therefore, our experiments try to answer the question:

\begin{description}
\item[Q1] How beneficial is PI when it is useful?
\end{description}

\noindent
To answer \textbf{Q1}, Experiments \ref{exp:robots} and \ref{exp:tail} compare ILP systems with and without PI on problems purposely designed to benefit from PI.
Specifically, we compare (i) \name{} against \popper{}, and (ii) \metagol{} against \metagol{} without the ability to reuse invented predicates, which we denote as \metagoln{}.

PI should improve performance when it is useful (Definition \ref{def:pi}), but what about when it is unnecessary?
In other words, can PI be harmful?
To answer this question, our experiments try to answer the question:

\begin{description}
\item[Q2] How costly is PI when it is unnecessary?
\end{description}

\noindent
To answer \textbf{Q2}, Experiment \ref{exp:lists} compares the same systems on a problem that should not benefit from PI.

To test our claim that \name{} can go beyond existing systems that support PI, our experiments try to answer the question:
\begin{description}
\item[Q3] How well does \name{} perform against other ILP systems?
\end{description}

\noindent
To answer \textbf{Q3}, we compare  \name{} against \popper{}.
We also compare \name{} against \metagol{}, which is the only system that supports automatic PI and learning recursive programs.



\paragraph{Settings.}
We use \name{} and \popper{} with the same settings as \citet{popper}: at most 6 variables in a clause, at most 5 body literals, and at most 3 clauses\footnote{
\popper{} cannot learn solutions in some of the experiments using these setting and quickly says that a problem is unsolvable, e.g. in Experiment \ref{exp:robots} \popper{} needs to learn a solution with 24 body literals.
However, when we allowed 24 body literals, \name{} could still not learn a solution in the given time.
We therefore use these settings but when \popper{} cannot find a solution, we assign it the maximum learning time.
}.
\metagol{} needs metarules.
We consider two sets of metarules: \emph{with} and \emph{without} recursion, listed in Appendix C.
\namem{} uses the same metarules encoded as hypothesis constraints.
We use a 3.8 GHz 8-Core Intel Core i7 with 32GB of ram.
Note that all the ILP systems only use a single CPU.


\subsection{Experiment 1 - Robot Planning}
\label{exp:robots}
This experiment aims to answer \textbf{Q1}.
We therefore use a problem where PI should be useful.

\paragraph{Materials.}
A robot is in a $100^2$ grid world.
An example is an atom $f(x,y)$ where $x$ and $y$ are initial and final states respectively.
A state describes the position of the robot.
The problem is to learn a plan to move right $k$ times.
As BK, we provide the single dyadic predicate \emph{right}, which moves the robot right one step.

\paragraph{Method.}
For each $k$ in $\{4,8,12,16,20,24\}$, we generate 10 positive training examples of the robot moving right $k$ times and $k-1$ negative training examples of the robot moving right $i$ times for $i\dots k-1$.
The target solution is a chain of $k$ \emph{right} actions.
As the value $k$ grows, PI should become more useful because a system can invent and reuse chains of actions.
We repeat the experiment five times.
We measure learning times and standard error.
We set a learning timeout of two minutes.

\paragraph{Results.}
\label{sec:robores}
Figure \ref{fig:robot-results} shows the results.
Without recursion, \metagol{} is the best performing system because it can chain invented predicate symbols, such as this program for when $k=16$, where each $f_i$ is invented:

\begin{code}
f(A,B):- f1(A,C),f1(C,B).\\
f1(A,B):- f2(A,C),f2(C,B).\\
f2(A,B):- f3(A,C),f3(C,B).\\
f3(A,B):- right(A,C),right(C,B).
\end{code}

\noindent
This reuse allows Metagol (and \namem{}) to move right $2^n$ times using only $n$ clauses, i.e. to move right $k>2$ times, \metagol{} requires at most $\ln(k)$ clauses.
By contrast, \metagoln{} performs poorly because it cannot reuse invented symbols and to move right $k>2$ times requires $k-1$ clauses.

\name{} substantially outperforms \popper{}.
To move right $k$ times, \popper{} must learn a program with a single clause formed of $k+1$ literals.
By contrast, \name{} can learn a more compact program, such as a program which requires only 10 literals to move right 16 times:

\begin{code}
f(A,B):- inv1(A,C),inv1(C,D),inv1(D,E),inv1(E,B).\\
inv1(A,B):- right(A,C),right(C,D),right(D,E),right(E,B).
\end{code}

\noindent
With recursion, \metagol{}  performs poorly because of its inefficient search algorithm.
By contrast, \name{} and \namem{} both perform well with recursion.

Overall, these results suggest that PI is extremely beneficial when useful (\textbf{Q1}) and that \name{} can drastically outperform existing systems (\textbf{Q3}).

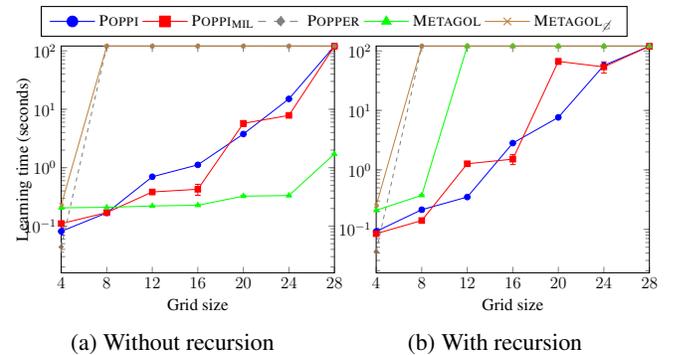
\begin{figure}[ht]
\centering
\begin{tikzpicture}
\begin{customlegend}[legend columns=5,legend style={nodes={scale=0.6, transform shape},align=left,column sep=0ex},
        legend entries={\name{}, \namem{}, \popper{}, \metagol{}, \metagoln{}}]
        \addlegendimage{mark=*,blue}
        \addlegendimage{mark=square*,red}
        \addlegendimage{mark=diamond*,gray,dashed}
        \addlegendimage{mark=triangle*,green}
        \addlegendimage{mark=x,brown}
        \end{customlegend}
\end{tikzpicture}
\begin{subfigure}{.5\linewidth}
\centering
\pgfplotsset{every tick label/.append style={font=\Large}}
\begin{tikzpicture}[scale=.53]
    \begin{axis}[
    xlabel=Grid size,
    ylabel=Learning time (seconds),
    xmin=4,xmax=28,
    ymin=0,ymax=121,
    ymode=log,
    xtick={4,8,12,...,28},
    ylabel style={yshift=-3mm},
    label style={font=\Large}
    ]

\addplot+[blue,mark=*,mark options={fill=blue},error bars/.cd,y dir=both,y explicit]
table [
x=size,
y=time,
y error plus expr=\thisrow{error},
y error minus expr=\thisrow{error},
] {data/robots/poppi-time.table};

\addplot+[red,mark=square*,mark options={color=red},error bars/.cd,y dir=both,y explicit]
table [
x=size,
y=time,
y error plus expr=\thisrow{error},
y error minus expr=\thisrow{error},
] {data/robots/poppi-mil-time.table};

\addplot+[gray,mark=diamond*,dashed,error bars/.cd,y dir=both,y explicit]
table [
x=size,
y=time,
y error plus expr=\thisrow{error},
y error minus expr=\thisrow{error},
] {data/robots/popper-time.table};

\addplot+[green,mark=triangle*,mark options={color=green},error bars/.cd,y dir=both,y explicit]
table [
x=size,
y=time,
y error plus expr=\thisrow{error},
y error minus expr=\thisrow{error},
] {data/robots/metagol-time.table};

\addplot+[brown,mark=x,mark options={color=brown},error bars/.cd,y dir=both,y explicit]
table [
x=size,
y=time,
y error plus expr=\thisrow{error},
y error minus expr=\thisrow{error},
] {data/robots/metagol-nopi-time.table};

    \end{axis}
  \end{tikzpicture}
\caption{Without recursion}
\end{subfigure}%
\begin{subfigure}{.5\linewidth}
\centering
\pgfplotsset{every tick label/.append style={font=\Large}}
\pgfplotsset{scaled x ticks=false}
\begin{tikzpicture}[scale=.53]
    \begin{axis}[
    xlabel=Grid size,
    xmin=4,xmax=28,
    ymin=0,ymax=121,
    xtick={4,8,12,...,28},
    ymode=log,
    label style={font=\Large}
    ]

\addplot+[blue,mark=*,mark options={fill=blue},error bars/.cd,y dir=both,y explicit]
table [
x=size,
y=time,
y error plus expr=\thisrow{error},
y error minus expr=\thisrow{error},
] {data/robots/poppi-rec-time.table};

\addplot+[red,mark=square*,mark options={color=red},error bars/.cd,y dir=both,y explicit]
table [
x=size,
y=time,
y error plus expr=\thisrow{error},
y error minus expr=\thisrow{error},
] {data/robots/poppi-mil-rec-time.table};

\addplot+[gray,mark=diamond*,dashed,error bars/.cd,y dir=both,y explicit]
table [
x=size,
y=time,
y error plus expr=\thisrow{error},
y error minus expr=\thisrow{error},
] {data/robots/popper-rec-time.table};

\addplot+[green,mark=triangle*,mark options={color=green},error bars/.cd,y dir=both,y explicit]
table [
x=size,
y=time,
y error plus expr=\thisrow{error},
y error minus expr=\thisrow{error},
] {data/robots/metagol-rec-time.table};

\addplot+[brown,mark=x,mark options={color=brown},error bars/.cd,y dir=both,y explicit]
table [
x=size,
y=time,
y error plus expr=\thisrow{error},
y error minus expr=\thisrow{error},
] {data/robots/metagol-nopi-rec-time.table};

    \end{axis}
  \end{tikzpicture}
\caption{With recursion}
\end{subfigure}
\caption{Robot experimental results.}
\label{fig:robot-results}
\end{figure}
\subsection{Experiment 2 - List Transformation}
\label{exp:tail}
This experiment aims to corroborate the results of Experiment 1 to answer \textbf{Q1}.
We therefore use a problem where PI should be useful.
This problem is modelled on the problems \citet{childhacker} use to model human intelligence.

\paragraph{Materials.}
The task is to learn a program to find the $kth$ element of a list.
An example is an atom $f(x,y)$, where $x$ is a list of characters and $y$ is a character.
A list is a random permutation of all the ASCII lowercase letters.
As BK, we provide the dyadic predicates \emph{head} and \emph{tail}.

\paragraph{Method.}
For each $k$ in $\{2,4,8,10,12,14\}$, we generate 1 positive example where $y$ is the $kth$ element of the list $x$.
We generate a negative training example for every other element in the list.
The target solution is a chain of $k-1$ \emph{tail} relations followed by a \emph{head} relation.
As the value $k$ grows, PI should become more useful because a system can invent and reuse chains of actions.
We repeat the experiment five times.
We measure learning times and standard error.
We set a learning timeout of two minutes.

\paragraph{Results.}
\label{sec:listres}
Figure \ref{fig:list-results} shows the results, which largely match those from Experiment \ref{exp:robots} and again suggest that PI is extremely beneficial when useful (\textbf{Q1}) and that \name{} can drastically outperform existing systems (\textbf{Q3}).

\begin{figure}[ht]
\centering
\begin{tikzpicture}
\begin{customlegend}[legend columns=5,legend style={nodes={scale=0.6, transform shape},align=left,column sep=0ex},
        legend entries={\name{}, \namem{}, \popper{}, \metagol{}, \metagoln{}}]
        \addlegendimage{mark=*,blue}
        \addlegendimage{mark=square*,red}
        \addlegendimage{mark=none,gray,dashed}
        \addlegendimage{mark=triangle*,green}
        \addlegendimage{mark=x,brown}
        \end{customlegend}
\end{tikzpicture}
\begin{subfigure}{.5\linewidth}
\centering
\pgfplotsset{every tick label/.append style={font=\Large}}
\begin{tikzpicture}[scale=.53]
    \begin{axis}[
    xlabel=Grid size,
    ylabel=Learning time (seconds),
    xmin=2,xmax=24,
    ymin=0,ymax=121,
    ymode=log,
    ylabel style={yshift=-3mm},
    xtick={4,8,12,...,24},
    label style={font=\Large}
    ]

\addplot+[blue,mark=*,mark options={fill=blue},error bars/.cd,y dir=both,y explicit]
table [
x=size,
y=time,
y error plus expr=\thisrow{error},
y error minus expr=\thisrow{error},
] {data/tail/poppi-time.table};

\addplot+[red,mark=square*,mark options={color=red},error bars/.cd,y dir=both,y explicit]
table [
x=size,
y=time,
y error plus expr=\thisrow{error},
y error minus expr=\thisrow{error},
] {data/tail/poppi-mil-time.table};

\addplot+[gray,mark=none,dashed,error bars/.cd,y dir=both,y explicit]
table [
x=size,
y=time,
y error plus expr=\thisrow{error},
y error minus expr=\thisrow{error},
] {data/tail/popper-time.table};

\addplot+[green,mark=triangle*,mark options={color=green},error bars/.cd,y dir=both,y explicit]
table [
x=size,
y=time,
y error plus expr=\thisrow{error},
y error minus expr=\thisrow{error},
] {data/tail/metagol-time.table};

\addplot+[brown,mark=x,mark options={color=brown},error bars/.cd,y dir=both,y explicit]
table [
x=size,
y=time,
y error plus expr=\thisrow{error},
y error minus expr=\thisrow{error},
] {data/tail/metagol-nopi-time.table};

    \end{axis}
  \end{tikzpicture}
\caption{Without recursion}
\end{subfigure}%
\begin{subfigure}{.5\linewidth}
\centering
\pgfplotsset{every tick label/.append style={font=\Large}}
\pgfplotsset{scaled x ticks=false}
\begin{tikzpicture}[scale=.53]
    \begin{axis}[
    xlabel=Grid size,
    xmin=2,xmax=24,
    ymin=0,ymax=121,
    ymode=log,
    xtick={4,8,12,...,24},
    label style={font=\Large}
    ]

\addplot+[blue,mark=*,mark options={fill=blue},error bars/.cd,y dir=both,y explicit]
table [
x=size,
y=time,
y error plus expr=\thisrow{error},
y error minus expr=\thisrow{error},
] {data/tail/poppi-rec-time.table};

\addplot+[red,mark=square*,mark options={color=red},error bars/.cd,y dir=both,y explicit]
table [
x=size,
y=time,
y error plus expr=\thisrow{error},
y error minus expr=\thisrow{error},
] {data/tail/poppi-mil-rec-time.table};

\addplot+[gray,mark=none,dashed,error bars/.cd,y dir=both,y explicit]
table [
x=size,
y=time,
y error plus expr=\thisrow{error},
y error minus expr=\thisrow{error},
] {data/tail/popper-rec-time.table};

\addplot+[green,mark=triangle*,mark options={color=green},error bars/.cd,y dir=both,y explicit]
table [
x=size,
y=time,
y error plus expr=\thisrow{error},
y error minus expr=\thisrow{error},
] {data/tail/metagol-rec-time.table};

\addplot+[brown,mark=x,mark options={color=brown},error bars/.cd,y dir=both,y explicit]
table [
x=size,
y=time,
y error plus expr=\thisrow{error},
y error minus expr=\thisrow{error},
] {data/tail/metagol-nopi-rec-time.table};

    \end{axis}
  \end{tikzpicture}
\caption{With recursion}
\end{subfigure}
\caption{List transformation experimental results.}
\label{fig:list-results}
\end{figure}
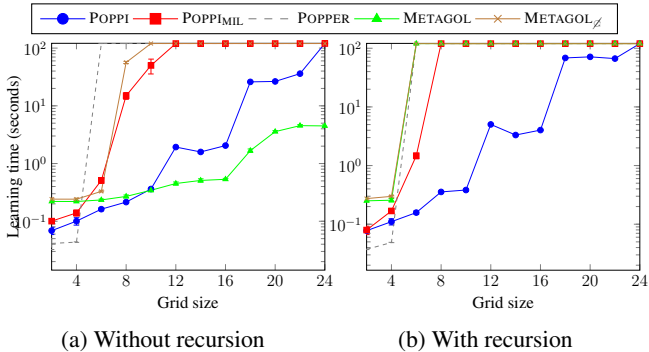
\subsection{Experiment 3 - Programming Puzzles}
\label{exp:lists}

Experiments 1 and 2 clearly show that PI can drastically improve performance when it is useful (\textbf{Q1}).
However, as we discussed at the start of Section \ref{sec:exp}, PI is not always useful.
The main purpose of this experiment is to answer \textbf{Q2}.
We therefore use problems where PI should \textbf{not} be useful.

\paragraph{Materials.}
We repeat the programming puzzles experiment from \cite{popper}.
The first column in Table \ref{tab:listaccs} shows the problems.
Note that these problems are extremely difficult for ILP systems and \popper{} is the first system that can reliably learn solutions to them.
We describe all the experimental materials in Appendix D.

\paragraph{Method.}
We generate 10 positive and 10 negative examples per problem.
Each example is randomly generated from lists up to length 50 formed of integers from 1-100.
We test on 1000 positive and 1000 negative randomly sampled examples.
We measure predictive accuracy and learning time.
We enforce a timeout of three minutes per task.
We repeat the experiment five times and measure the mean and standard error.

\paragraph{Results.}
Table \ref{tab:listaccs} shows that mean predictive accuracies.
\popper{} outperforms \name{} and a McNemar's test confirms the significance at the $p < 0.01$ level.
Because it supports PI, \name{} searches a larger hypothesis space than \popper{}, which is why it takes longer on some tasks.
When the predictive accuracy of \name{} is not 100\%, it is because it did not learn a solution in the given time in some of the trials.
Table \ref{tab:listtimes} shows this difference in learning time.
As is clear, \popper{} outperforms \name{} in all cases and a paired t-test confirms the significance at the $p < 0.01$ level, although the difference is always less than double.
Overall, these results suggest that PI is not too costly when not useful (\textbf{Q2}).
Moreover, as Table \ref{tab:listaccs} shows, \name{} drastically outperforms \metagol{} on all but three problems, where they both perform identically and helps answer (\textbf{Q3}).

\begin{table}[ht]
\centering
\small
\begin{tabular}{l|c|c|c|c}
\toprule
\textbf{Name} & \textbf{\name{}} & \textbf{\namem{}} & \textbf{\popper{}} & \textbf{\metagol{}}\\
\midrule
\tw{addhead} & \textbf{100} $\pm$ 0 & 50 $\pm$ 0 & \textbf{100} $\pm$ 0 & 50 $\pm$ 0 \\
\tw{dropk} & \textbf{100} $\pm$ 0 & 50 $\pm$ 0 & \textbf{100} $\pm$ 0 & 50 $\pm$ 0 \\
\tw{droplast} & \textbf{100} $\pm$ 0 & 50 $\pm$ 0 & \textbf{100} $\pm$ 0 & 50 $\pm$ 0 \\
\tw{evens} & \textbf{100} $\pm$ 0 & 50 $\pm$ 0 & \textbf{100} $\pm$ 0 & 50 $\pm$ 0 \\
\tw{finddup} & 98 $\pm$ 0 & 80 $\pm$ 12 & \textbf{99} $\pm$ 0 & 50 $\pm$ 0 \\
\tw{last} & \textbf{100} $\pm$ 0 & 50 $\pm$ 0 & \textbf{100} $\pm$ 0 & \textbf{100} $\pm$ 0 \\
\tw{len} & \textbf{100} $\pm$ 0 & 50 $\pm$ 0 & \textbf{100} $\pm$ 0 & 50 $\pm$ 0 \\
\tw{member} & \textbf{100} $\pm$ 0 & \textbf{100} $\pm$ 0 & \textbf{100} $\pm$ 0 & \textbf{100} $\pm$ 0 \\
\tw{sorted} & \textbf{100} $\pm$ 0 & 50 $\pm$ 0 & \textbf{100} $\pm$ 0 & 50 $\pm$ 0 \\
\tw{threesame} & \textbf{99} $\pm$ 0 & 50 $\pm$ 0 & \textbf{99} $\pm$ 0 & \textbf{99} $\pm$ 0 \\
\bottomrule
\end{tabular}
\caption{
Programming puzzles predictive accuracies.
We round accuracies to integer values.
The error is standard error.
}
\label{tab:listaccs}
\end{table}

\begin{table}[ht]
\centering
\small
\begin{tabular}{l|c|c|c|c}
\toprule
\textbf{Name} & \textbf{\name{}} & \textbf{\namem{}} & \textbf{\popper{}} & \textbf{\metagol{}}\\
\midrule
\tw{addhead} & 1 $\pm$ 0 & 120 $\pm$ 0.1 & \textbf{0.4} $\pm$ 0 & 120 $\pm$ 0 \\
\tw{dropk} & 1 $\pm$ 0.1 & \textbf{0.1} $\pm$ 0 & 1 $\pm$ 0 & 0.2 $\pm$ 0 \\
\tw{droplast} & 5 $\pm$ 1 & 120 $\pm$ 0.1 & \textbf{4} $\pm$ 0.8 & 120 $\pm$ 0 \\
\tw{evens} & 8 $\pm$ 0.5 & \textbf{0.7} $\pm$ 0 & 6 $\pm$ 0.2 & 120 $\pm$ 0 \\
\tw{finddup} & 91 $\pm$ 5 & 69 $\pm$ 22 & \textbf{48} $\pm$ 2 & 120 $\pm$ 0 \\
\tw{last} & 4 $\pm$ 0.2 & 120 $\pm$ 0.1 & 3 $\pm$ 0.3 & \textbf{0.6} $\pm$ 0.3 \\
\tw{len} & 25 $\pm$ 1 & 120 $\pm$ 0.1 & \textbf{16} $\pm$ 0.6 & 120 $\pm$ 0 \\
\tw{member} & 0.7 $\pm$ 0.1 & \textbf{0.2} $\pm$ 0 & \textbf{0.2} $\pm$ 0 & 0.3 $\pm$ 0 \\
\tw{sorted} & 32 $\pm$ 3 & \textbf{0.5} $\pm$ 0 & 35 $\pm$ 5 & 120 $\pm$ 0 \\
\tw{threesame} & \textbf{0.2} $\pm$ 0 & 0.6 $\pm$ 0 & \textbf{0.2} $\pm$ 0 & 0.6 $\pm$ 0.2 \\
\bottomrule
\end{tabular}
\caption{
Programming puzzles learning times.
We round times over 1 second to the nearest second.
The error is standard error.
}
\label{tab:listtimes}
\end{table}
\section{Conclusions and Limitations}

Although seen as key to human-level AI \cite{russell:humancomp} and repeatedly stated as a major challenge, most ILP systems do not support PI, and those that do have severe limitations (Section \ref{sec:related}).
To overcome these limitations, we have extended LFF with redundancy constraints which are sound (Theorem \ref{prop:redundant_optimality}) and more effective (Theorem \ref{prop:more_pruning}) than existing constraints.
We implemented our approach in the ILP system \name{}, which supports automatic PI, i.e. does not need metarules nor requires a user manually predefine invented symbols.
We have experimentally shown that (i) PI can drastically improve learning performance when \emph{useful}, (ii) PI is not too costly when \emph{unnecessary}, and (iii) \name{} can substantially outperform existing ILP systems.

There are limitations for future work to address.

\paragraph{Inefficient PI.}
Although \name{} convincingly outperforms \metagol{}, its PI technique is inefficient.
For instance, in Experiment \ref{exp:robots}, when learning to move right eight times, \name{} considers this hypothesis:

\begin{code}
f(A,B) :- inv1(A,C),right(C,D),right(D,B).\\
inv1(A,B) :- right(A,C),right(C,B).
\end{code}

\noindent
This hypothesis fails, as it only moves right four times.
From this failure \name{} learns constraints to prune specialisations of it.
However, \name{} will still consider logically equivalent hypotheses, such as:

\begin{code}
f(A,B) :- right(A,C),inv1(C,D),right(D,B).\\
inv1(A,B) :- right(A,C),right(C,B).
\end{code}

\noindent
We could address this issue by reasoning about unfolded hypotheses \cite{unfolding}.
In general, we expect to make further substantial efficiency improvements in \name{}.





\paragraph{Higher-arity invention.}
\metagol{} can only learn monadic and dyadic programs because of the restrictions imposed by metarules.
Therefore, to perform a fair experimental comparison, our experiments only consider inventing monadic and dyadic predicates.
\name{} can, however, invent predicates with arity greater than two.
In future work, we want to apply \name{} to more general problems with higher-arity invention.



\bibliographystyle{named}
\bibliography{manuscript}

\begin{thebibliography}{}

\bibitem[\protect\citeauthoryear{Apt and Bol}{1994}]{apt:negationsurvey}
Krzysztof~R. Apt and Roland~N. Bol.
\newblock Logic programming and negation: {A} survey.
\newblock {\em J. Log. Program.}, 1994.

\bibitem[\protect\citeauthoryear{Cropper and Morel}{2021}]{popper}
Andrew Cropper and Rolf Morel.
\newblock Learning programs by learning from failures.
\newblock {\em Machine Learning}, 2021.

\bibitem[\protect\citeauthoryear{Cropper and Muggleton}{2016}]{metaho}
Andrew Cropper and Stephen~H. Muggleton.
\newblock Learning higher-order logic programs through abstraction and
  invention.
\newblock In {\em {IJCAI}}, 2016.

\bibitem[\protect\citeauthoryear{Cropper and Tourret}{2020}]{reduce}
Andrew Cropper and Sophie Tourret.
\newblock Logical reduction of metarules.
\newblock {\em Machine Learning}, 2020.

\bibitem[\protect\citeauthoryear{Cropper \bgroup \em et al.\egroup
  }{2020}]{ilp30}
Andrew Cropper, Sebastijan Dumancic, and Stephen~H. Muggleton.
\newblock Turning 30: New ideas in inductive logic programming.
\newblock In {\em {IJCAI}}, 2020.

\bibitem[\protect\citeauthoryear{Cropper}{2019}]{playgol}
Andrew Cropper.
\newblock Playgol: Learning programs through play.
\newblock {\em {IJCAI}}, 2019.

\bibitem[\protect\citeauthoryear{Dumancic and Blockeel}{2017}]{seb:cluster}
Sebastijan Dumancic and Hendrik Blockeel.
\newblock Clustering-based relational unsupervised representation learning with
  an explicit distributed representation.
\newblock In {\em {IJCAI}}, 2017.

\bibitem[\protect\citeauthoryear{Dumancic \bgroup \em et al.\egroup
  }{2019}]{alps}
Sebastijan Dumancic, Tias Guns, Wannes Meert, and Hendrik Blockeel.
\newblock Learning relational representations with auto-encoding logic
  programs.
\newblock {\em {IJCAI}}, 2019.

\bibitem[\protect\citeauthoryear{Dumancic \bgroup \em et al.\egroup
  }{2021}]{knorf}
Sebastijan Dumancic, Tias Guns, and Andrew Cropper.
\newblock Knowledge refactoring for inductive program synthesis.
\newblock {\em {AAAI}}, 2021.

\bibitem[\protect\citeauthoryear{Evans and Grefenstette}{2018}]{dilp}
Richard Evans and Edward Grefenstette.
\newblock Learning explanatory rules from noisy data.
\newblock {\em JAIR}, 2018.

\bibitem[\protect\citeauthoryear{Evans \bgroup \em et al.\egroup
  }{2021}]{apperception}
Richard Evans, José Hernández-Orallo, Johannes Welbl, Pushmeet Kohli, and
  Marek Sergot.
\newblock Making sense of sensory input.
\newblock {\em Artificial Intelligence}, 2021.

\bibitem[\protect\citeauthoryear{Gebser \bgroup \em et al.\egroup }{2012}]{asp}
Martin Gebser, Roland Kaminski, Benjamin Kaufmann, and Torsten Schaub.
\newblock {\em Answer Set Solving in Practice}.
\newblock Morgan {\&} Claypool Publishers, 2012.

\bibitem[\protect\citeauthoryear{Gebser \bgroup \em et al.\egroup
  }{2019}]{multishot}
Martin Gebser, Roland Kaminski, Benjamin Kaufmann, and Torsten Schaub.
\newblock Multi-shot {ASP} solving with clingo.
\newblock {\em TPLP}, 2019.

\bibitem[\protect\citeauthoryear{Hocquette and Muggleton}{2020}]{celine:bottom}
C{\'{e}}line Hocquette and Stephen~H. Muggleton.
\newblock Complete bottom-up predicate invention in meta-interpretive learning.
\newblock In {\em {IJCAI}}, 2020.

\bibitem[\protect\citeauthoryear{Kaminski \bgroup \em et al.\egroup
  }{2019}]{hexmil}
Tobias Kaminski, Thomas Eiter, and Katsumi Inoue.
\newblock Meta-interpretive learning using hex-programs.
\newblock In {\em {IJCAI}}, 2019.

\bibitem[\protect\citeauthoryear{Kok and Domingos}{2007}]{pedro:pi}
Stanley Kok and Pedro~M. Domingos.
\newblock Statistical predicate invention.
\newblock In {\em {ICML}}, 2007.

\bibitem[\protect\citeauthoryear{Kramer}{1995}]{kramer1995predicate}
Stefan Kramer.
\newblock Predicate invention: A comprehensive view.
\newblock {\em Technical report}, 1995.

\bibitem[\protect\citeauthoryear{Kramer}{2020}]{kramer:ijcai20}
Stefan Kramer.
\newblock A brief history of learning symbolic higher-level representations
  from data (and a curious look forward).
\newblock In {\em {IJCAI}}, 2020.

\bibitem[\protect\citeauthoryear{Law \bgroup \em et al.\egroup }{2014}]{ilasp}
Mark Law, Alessandra Russo, and Krysia Broda.
\newblock Inductive learning of answer set programs.
\newblock In {\em {JELIA}}, 2014.

\bibitem[\protect\citeauthoryear{Leban \bgroup \em et al.\egroup
  }{2008}]{DBLP:conf/ilp/LebanZB08}
Gregor Leban, Jure Zabkar, and Ivan Bratko.
\newblock An experiment in robot discovery with {ILP}.
\newblock 2008.

\bibitem[\protect\citeauthoryear{Lin \bgroup \em et al.\egroup
  }{2014}]{metabias}
Dianhuan Lin, Eyal Dechter, Kevin Ellis, Joshua~B. Tenenbaum, and Stephen
  Muggleton.
\newblock Bias reformulation for one-shot function induction.
\newblock In {\em {ECAI}}, 2014.

\bibitem[\protect\citeauthoryear{Lloyd}{2012}]{lloyd:book}
John~W Lloyd.
\newblock {\em Foundations of logic programming}.
\newblock Springer Science \& Business Media, 2012.

\bibitem[\protect\citeauthoryear{Muggleton and Buntine}{1988}]{cigol}
Stephen Muggleton and Wray~L. Buntine.
\newblock Machine invention of first order predicates by inverting resolution.
\newblock In {\em {ICML}}, 1988.

\bibitem[\protect\citeauthoryear{Muggleton \bgroup \em et al.\egroup
  }{2012}]{ilp20}
Stephen Muggleton, Luc {De Raedt}, David Poole, Ivan Bratko, Peter~A. Flach,
  Katsumi Inoue, and Ashwin Srinivasan.
\newblock {ILP} turns 20 - biography and future challenges.
\newblock {\em Machine Learning}, 2012.

\bibitem[\protect\citeauthoryear{Muggleton \bgroup \em et al.\egroup
  }{2015}]{mugg:metagold}
Stephen~H. Muggleton, Dianhuan Lin, and Alireza Tamaddoni{-}Nezhad.
\newblock Meta-interpretive learning of higher-order dyadic {Datalog}:
  predicate invention revisited.
\newblock {\em Machine Learning}, 2015.

\bibitem[\protect\citeauthoryear{Muggleton}{1991}]{mugg:ilp}
Stephen Muggleton.
\newblock Inductive logic programming.
\newblock {\em New Generation Comput.}, 1991.

\bibitem[\protect\citeauthoryear{Muggleton}{1994}]{mugg:pi}
Stephen Muggleton.
\newblock Predicate invention and utilization.
\newblock {\em J. Exp. Theor. Artif. Intell.}, 1994.

\bibitem[\protect\citeauthoryear{Plotkin}{1971}]{plotkin:thesis}
G.D. Plotkin.
\newblock {\em Automatic Methods of Inductive Inference}.
\newblock PhD thesis, Edinburgh University, August 1971.

\bibitem[\protect\citeauthoryear{Raedt}{2008}]{luc:book}
Luc~De Raedt.
\newblock {\em Logical and relational learning}.
\newblock Cognitive Technologies. Springer, 2008.

\bibitem[\protect\citeauthoryear{Rule \bgroup \em et al.\egroup
  }{2020}]{childhacker}
Joshua~S. Rule, Joshua~B. Tenenbaum, and Steven~T. Piantadosi.
\newblock The child as hacker.
\newblock {\em Trends in Cognitive Sciences}, 2020.

\bibitem[\protect\citeauthoryear{Russell and Norvig}{2010}]{ai:book}
S.J. Russell and P.~Norvig.
\newblock {\em Artificial Intelligence: A Modern Approach}.
\newblock Pearson, 2010.

\bibitem[\protect\citeauthoryear{Russell}{2019}]{russell:humancomp}
Stuart Russell.
\newblock {\em Human compatible: Artificial intelligence and the problem of
  control}.
\newblock Penguin, 2019.

\bibitem[\protect\citeauthoryear{Sch{\"{u}}ller and Benz}{2018}]{inspire}
Peter Sch{\"{u}}ller and Mishal Benz.
\newblock Best-effort inductive logic programming via fine-grained cost-based
  hypothesis generation.
\newblock {\em Machine Learning}, 2018.

\bibitem[\protect\citeauthoryear{Stahl}{1995}]{stahl:pi}
Irene Stahl.
\newblock The appropriateness of predicate invention as bias shift operation in
  {ILP}.
\newblock {\em Machine Learning}, 1995.

\bibitem[\protect\citeauthoryear{Tamaki and Sato}{1984}]{unfolding}
Hisao Tamaki and Taisuke Sato.
\newblock Unfold/fold transformation of logic programs.
\newblock In {\em ICLP}, 1984.

\bibitem[\protect\citeauthoryear{Wang \bgroup \em et al.\egroup
  }{2015}]{wang:softpi}
William~Yang Wang, Kathryn Mazaitis, and William~W. Cohen.
\newblock A soft version of predicate invention based on structured sparsity.
\newblock In {\em {IJCAI}}, 2015.

\end{thebibliography}




\section{Appendix: Redundancy and Optimality}
\label{app:redundant_opt}

\begingroup
\renewcommand\theproposition{\ref{prop:redundant_optimality}}
\begin{theorem}[\textbf{Redundancy soundness}]
Let $(E^+, E^-, B, D, C)$ be a LFF input, $H_1, H_2 \in \mathcal{H}_{D,C}$, and $H_1$ be totally incomplete.
If $H_2$ has a clause that $H_1$-specialises it, then $H_2$ is not an optimal solution.
\end{theorem}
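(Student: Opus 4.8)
I would prove this by cases on whether $H_2$ is a solution. If $H_2$ is not a solution then, by the definition of an optimal solution (Definition~\ref{def:opthyp}), it is not optimal and we are done; so assume $H_2$ is a solution, and the task becomes to exhibit a solution in $\mathcal{H}_{D,C}$ of strictly smaller cost. Write $P' = \{C' \in H_2 \mid \exists D \in H_1,\ D \preceq C'\}$ and let $C \in P'$ be the clause that $H_1$-specialises $H_2$, so that $C$ depends on no clause of $H_2 \setminus P'$ and no clause of $H_2 \setminus P'$ depends on $C$.

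The crucial observation is that the down-set $R = \{C\} \cup \{C' \in H_2 \mid C \text{ depends on } C'\}$ is closed under the dependency relation and, by condition~(i), is contained in $P'$; hence every clause of $R$ is subsumed by some clause of $H_1$. Since replacing a clause by a clause it is an instance of only weakens an SLD-derivation (a more general clause resolves any atom its specialisation does and contributes no extra body subgoals), relabelling each clause of $R$ occurring in a derivation from $R \cup B$ by its more general $H_1$-counterpart yields a derivation from $H_1 \cup B$; therefore every ground atom derivable from $R \cup B$ lies in the least Herbrand model of $H_1 \cup B$, which, as $H_1$ is totally incomplete, contains no positive example. Informally, the fragment of $H_2$ reachable from $C$ is no stronger than the totally incomplete $H_1$, and condition~(ii) guarantees that nothing outside $P'$ feeds into $C$, so $C$ cannot be contributing to the coverage of any positive example.

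To turn this into a strictly smaller solution I would first consider $H_3 = H_2 \setminus \{C\}$ (also discarding any clause that thereby becomes unreachable from a target clause, so as to remain in $\mathcal{H}_{D,C}$): it is consistent since $H_3 \subseteq H_2$, it is strictly smaller, and if it is complete we are done --- this already settles Example~\ref{ex:redundant_clause}. If $H_3$ is incomplete, there is a positive example $e$ every SLD-refutation of which uses $C$; I would take such a refutation, replace the clauses of $R$ it uses by their more general $H_1$-counterparts --- condition~(ii) is what permits this without disturbing any clause of $H_2 \setminus P'$ appearing elsewhere in the refutation --- and then unfold $C$'s head predicate out of the program, obtaining a program that still covers $e$ and the other positive examples, remains consistent, and has a strictly smaller literal count, contradicting optimality of $H_2$. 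The step I expect to be the main obstacle is making this last manipulation rigorous: checking that conditions~(i) and~(ii) really do isolate the $C$-anchored part of every derivation, that the rewritten program is both strictly smaller and still declaration- and constraint-consistent, and handling the degenerate case in which each subsumption $D \preceq C'$ is a variable renaming (so $R$ is, up to renaming, already a sub-program of $H_1$), where one instead argues directly that $C$'s contribution to the least model is redundant and can be deleted outright.
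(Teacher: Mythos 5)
Your first branch is essentially the paper's entire proof, and the paper stops there: it argues that every clause that depends on the clause $C$, and every clause that $C$ depends on, lies in $P'$; since $P'$ is a specialisation of the totally incomplete $H_1$, it is itself totally incomplete, so $C$ ``cannot have contributed'' to entailing any positive example; hence $H_2 \setminus \{C\}$ remains complete, remains consistent because deleting a clause from a definite program never adds consequences, and is strictly smaller, contradicting optimality. In other words, the paper asserts that your second branch ($H_3$ incomplete) simply never occurs. Your observation that the down-set $R$ of $C$ is contained in $P'$ and therefore derives only atoms in the least model of $H_1 \cup B$ is a more carefully spelled-out version of the paper's ``$C$ is only used by totally incomplete $P' \succeq H_1$'' step, and that part is fine.

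The genuine gap is your second branch. As written it is not a proof: the ``unfold $C$'s head predicate out of the program'' move is never defined; unfolding in general \emph{increases} rather than decreases the literal count, so strict smallness is not established; and you do not verify that the rewritten program stays in $\mathcal{H}_{D,C}$ (declaration- and constraint-consistent) or remains consistent with $E^-$. You cannot both concede that $H_2 \setminus \{C\}$ may be incomplete and rest the conclusion on a transformation you admit you cannot make rigorous. To match the paper you must instead show the second branch is vacuous: any refutation of a positive example that uses $C$ has a top clause that depends on $C$, which by condition (ii) forces that clause --- and the whole dependency chain down to and below $C$ --- into $P'$, and the paper concludes from the total incompleteness of $P'$ that no such refutation exists, so $H_2 \setminus \{C\}$ is already complete. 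Your instinct that this is the delicate step is not unreasonable (the spine clauses in $P'$ may have other body atoms resolved outside $P'$, which is exactly what the paper's one-sentence justification glosses over), but as submitted your proposal identifies the decisive case and then leaves it open rather than closing it.
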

\endgroup

\begin{proof}
Suppose $H_2$ is a solution and that its clause $C$ $H_1$-specialises $H_2$.
Let $H_2'$ be the clauses of $H_2$ \emph{not} specialising a clause of $H_1$.
No clause of $H_2'$ depends on $C$ or has $C$ depend on it, meaning $C$ is only used by totally incomplete $H_2 \setminus H_2' \succeq H_1$.
Hence $C$ cannot have contributed to $H_2$ entailing positive examples.
Therefore $H_2 \setminus \{ C \}$ entails all positive examples and, due to the programs being definite, does not entail additional negative examples.
Conclude that $H_2$ is not optimal as a smaller solution exists.
\end{proof}

\section{Appendix: Building Redundancy Constraints}
\label{app:build_redundancy_constraints}

We describe how redundancy constraints are derived.
Let $H$ be a totally incomplete hypothesis.
For each clause in $H$, we generate a \emph{seen clause} rule.
For example, we assign the clause \texttt{f(A,B) :- inv1(A,C), right(C,B)} the identifier \texttt{c1} and we generate the rule:

\begin{lstlisting}[frame=single]
seen(C,c1):- head_lit(C,f,2,(V0,V1)),
  body_lit(C,inv1,2,(V0,V2)),
  body_lit(C,right,2,(V2,V1)).
\end{lstlisting}

\noindent
We assign each hypothesis an identifier, e.g.~\texttt{h1}.
Suppose that $H$ has \texttt{N} clauses.
Then the head of the following \emph{seen program} rule is true precisely when the program contains a specialisation of $H$:
\begin{lstlisting}[frame=single]
seen(h1):- seen(_,c1),...,seen(_,cN).
\end{lstlisting}

\noindent
We then consider all predicates defined by $H$.
When a predicate \texttt{p} depends on itself (is recursive), we say \texttt{p} is \emph{recursively called}.
For each predicate \texttt{p} of $H$ that is \emph{not} recursively called, we generate a constraint.
For fixed \texttt{p}, let \texttt{q1}, \ldots, \texttt{qM} be all the predicates of $H$ except \texttt{p}.
Let \texttt{\#cls($P$)} denote the number of clauses defining predicate $P$ in $H$.
\texttt{\#rec\_cls($P$)} denotes the number of recursive clauses of $P$.
Each non-recursively called predicate of $H$ then causes a redundancy constraint of the following shape to be generated:

\begin{lstlisting}[frame=single]
:- seen(h1),
   num_clauses(q1,#cls(q1)),
   ...,
   num_clauses(qM,#cls(qM)),
   num_recursive(p,#rec_cls(p)).
\end{lstlisting}

\section{Appendix: Metarules}
\label{sec:metarules}

\subsection{Experiments \ref{exp:robots} and \ref{exp:tail}}
\begin{lstlisting}[frame=single,caption={Metarules without recursion.}]
metarule([P,Q], [P,A,B], [[Q,A,B]]).
metarule([P,Q,R], [P,A,B], [[Q,A,B],[R,A]]).
metarule([P,Q,R], [P,A,B], [[Q,A,B],[R,B]]).
metarule([P,Q,R], [P,A,B], [[Q,A,C],[R,C,B]]).
\end{lstlisting}

\begin{lstlisting}[frame=single,caption={Metarules with recursion.}]
metarule([P,Q], [P,A,B], [[Q,A,B]]).
metarule([P,Q,R], [P,A,B], [[Q,A,B],[R,A]]).
metarule([P,Q,R], [P,A,B], [[Q,A,B],[R,B]]).
metarule([P,Q,R], [P,A,B], [[Q,A,C],[R,C,B]]).
metarule([P,Q], [P,A,B], [[Q,A,C],[P,C,B]]).
\end{lstlisting}

\subsection{Experiment \ref{exp:lists}}

\begin{lstlisting}[frame=single, caption={Metarules used in Experiment \ref{exp:lists}}]
metarule([P,Q], [P,A], [[Q,A]]).
metarule([P,Q], [P,A], [[Q,A]]).
metarule([P,Q,R], [P,A], [[Q,A,B],[R,B]]).
metarule([P,Q], [P,A], [[Q,A,B],[P,B]]).
metarule([P,Q,R], [P,A], [[Q,A,B],[R,A,B]]).
metarule([P,Q], [P,A,B], [[Q,A,B]]).
metarule([P,Q,R], [P,A,B], [[Q,A,B],[R,A,B]]).
metarule([P,Q,R], [P,A,B], [[Q,A],[R,A,B]]).
metarule([P,Q,R], [P,A,B], [[Q,A,B],[R,B]]).
metarule([P,Q,R], [P,A,B], [[Q,A,C],[R,C,B]]).
metarule([P,Q], [P,A,B], [[Q,A,C],[P,C,B]]).
metarule([P,Q], [P,A,B], [[Q,A,B]]).
\end{lstlisting}

\section{Appendix: Programming Puzzle Experimental Details}
\label{app:lists}
We give each system the following dyadic relations \emph{head}, \emph{tail}, \emph{decrement}, \emph{geq} and the monadic relations \emph{empty}, \emph{zero}, \emph{one}, \emph{even}, and \emph{odd}.
We also include the dyadic relation \emph{increment} in the \tw{len} experiment.
We had to remove this relation from the BK for the other experiments because when given this relation \metagol{} runs into infinite recursion on almost every problem and could not find any solutions.
We also include \emph{member/2} in the find duplicate problem.
We also include \emph{cons/3} in the \tw{addhead}, \tw{dropk}, and \tw{droplast} experiments.
We exclude this relation from the other experiments because \metagol{} does not easily support triadic relations.

\paragraph{\name{} settings.}
We set \name{} to use at most five unique variables, at most five body literals, and at most two clauses.
For each BK relation, we also provide both systems with simple types and argument directions (whether input or output).
Because \name{} can generate non-terminating Prolog programs, we set both systems to use a testing timeout of 0.1 seconds per example.
If a program times out, we view it as a failure.


\end{document}